%% file: main_revised_2.tex
\documentclass[conference]{IEEEtran}
\IEEEoverridecommandlockouts
\usepackage{cite}
\usepackage{amsmath,amssymb,amsfonts}
\usepackage{amsthm}
\newtheorem{assumption}{Assumption}
\newtheorem{lemma}{Lemma}
\newtheorem{proposition}{Proposition}
\newtheorem{corollary}{Corollary}
\theoremstyle{remark}
\newtheorem{remark}{Remark}
\usepackage{algorithmic}
\usepackage{graphicx}
\usepackage{float}
\usepackage{hyperref}
\usepackage{subcaption}
\usepackage{tikz}
\usetikzlibrary{positioning,arrows.meta}
\usepackage{textcomp}
\usepackage{circuitikz}
\usetikzlibrary{fit, positioning}
\usepackage[percent]{overpic}  % percent option makes coords 0–100

\usepackage{xcolor}
\def\BibTeX{{\rm B\kern-.05em{\sc i\kern-.025em b}\kern-.08em
    T\kern-.1667em\lower.7ex\hbox{E}\kern-.125emX}}
\begin{document}

\title{PE-MHL: Physics-Encoded Modular Hybrid Layers for Scalable Learning of 
Complex Systems}

\author{\IEEEauthorblockN{1\textsuperscript{st} Ismail Hassaballa}
\IEEEauthorblockA{\textit{Eindhoven University of Technology}\\
Eindhoven, Netherlands \\
ismail.hassaballa@student.tue.nl}
\and
\IEEEauthorblockN{2\textsuperscript{nd} Mircea Lazar\textsuperscript{*}}
\IEEEauthorblockA{\textit{Eindhoven University of Technology}\\
Eindhoven, Netherlands \\
m.lazar@tue.nl}
\thanks{\textsuperscript{*}Corresponding author: Mircea Lazar (m.lazar@tue.nl).}
}

\maketitle

\begin{abstract}

Hybrid models that combine physics-based and data‐driven components have shown strong potential for achieving accuracy and interpretability in control applications. While recent methods have made progress in incorporating physical consistency, challenges remain in scalability, robustness to noise, and control of model complexity. This paper proposes a \emph{Physics-Encoded Modular Hybrid Layer} (PE-MHL) framework, in which a baseline physics‐based model is incrementally refined through the addition of new sub-models, where each new component adds complexity while preserving what previous components have already learned. We establish a theoretical guarantee for this construction: with a least-squares initialization of each new sub-model, the training error is monotonically non-increasing in the number of sub-models and provably converges. Empirical evaluations on a nonlinear NARX benchmark and the Quanser Aero 2 platform demonstrate that PE-MHL outperforms equivalently sized monolithic networks in both accuracy and generalization, while also providing more stable training dynamics and better preservation of underlying data structures. 

\end{abstract}

\begin{IEEEkeywords}
hybrid modeling, physics-encoded learning, modular neural networks, system identification, residual learning
\end{IEEEkeywords}
\newif\ifl
\lfalse  % set to \longfalse to hide these sections

\section{Introduction}
In modern control applications, there is a growing demand for models with higher accuracy and reliability, as even small prediction errors can result in inadequate or unsafe system behavior. Physics‐based models offer interpretability and adhere to physical laws, but often struggle to capture complex real‐world dynamics. Data‐driven models, on the other hand, can fit complex patterns, but they typically require large datasets and tend to generalize poorly beyond their training data \cite{li2025editorial}.

Hybrid modeling, where a physics‐based component is combined with a data‐driven model, has emerged as a promising approach that leverages the strengths of both. By encapsulating physical knowledge into the model and its learning process, hybrid models can achieve higher predictive accuracy with less data, while maintaining consistency with the underlying physics \cite{karniadakis2021physics, sivaranjani2025control}. Physical knowledge can be imposed either soft-coded, through penalty terms in the training loss as in physics-informed neural networks, or hard-coded, by encoding physical structure directly into the model architecture; the latter has been shown to improve extrapolation and interpretability while avoiding the sensitive hyperparameter tuning of soft-constraint methods \cite{rao2023encoding}. For example, the authors of \cite{bolderman2021pgnn} proposed a Physics-Guided Neural Network (PGNN) architecture in which a physics-inspired model is corrected by a neural network. This approach was later extended in \cite{bolderman2022pgnn} by introducing cost regularization and optimized initialization strategies, resulting in improved performance and interpretability. \cite{jia2021physicsguided} developed a physics‐guided RNN for lake temperature forecasting, achieving over 20\% increase in prediction accuracy with limited training data. More recently, physics-informed neural networks have been applied to the joint state estimation and identification of nonlinear dynamical systems from sparse, noisy measurements \cite{haywoodalexander2025response}. Other applications in health monitoring confirm that neural architectures incorporating physics enhance both robustness and interpretability \cite{wu2024review}.

Despite these successes, several gaps remain. Firstly, scalability to high‐dimensional or multi‐physics systems and the integration of multiple physical domains remain open problems \cite{gupta2024physicsguided}. Secondly, ensuring that each component of a hybrid model remains individually interpretable, rather than allowing a neural network to distort the physics model, has not been fully solved, especially under noisy or uncertain data conditions \cite{wong2022robustness}. Recent work has begun to address this through regularization strategies that constrain the neural network during training to preserve the physical meaning. For example, the method proposed in \cite{bolderman2022pgnn} introduced a cost function that penalizes deviation from known physical parameters, thereby maintaining interpretability in physics-guided neural networks.

To further address these issues, a \emph{progressive hybrid ensemble} framework, termed Physics-Encoded Modular Hybrid Layers (PE-MHL), is developed in this paper. Building on prior hybrid modeling strategies \cite{bolderman2021pgnn, bolderman2022pgnn}, the method incrementally refines the model through the addition of new components, each of which adds complexity while preserving the structure that the previous components have already learned. The sequential refinement of residual errors is a trait shared with boosting and, in particular, with gradient boosting, which fits each new learner to the residuals of the current ensemble \cite{schapire1990strength, friedman2001greedy}. The proposed framework differs by preserving the interpretability of the physics-based core and by anchoring previously trained components through a deviation penalty, rather than freezing or discarding them. This anchoring is conceptually related to continual-learning techniques such as progressive networks \cite{rusu2016progressive} and elastic weight consolidation \cite{kirkpatrick2017overcoming}, which similarly constrain parameters to retain prior knowledge as new capacity is added. In contrast to these somewhat heuristic constructions, we provide theoretical guarantees, i.e., under a mild persistency-of-excitation condition, the least-squares initialization of each new sub-model ensures that adding it can never increase the training error and that the resulting error sequence converges, with geometric decay to zero under an additional condition. The developed framework is evaluated on several benchmarks from the literature, i.e., learning a static nonlinear function from \cite{pillonetto2022regularization} and a  nonlinear auto-regressive exogenous (NARX) system from \cite{billings2013nonlinear}. Furthermore, the approach is validated on a real-world system, the Quanser Aero 2 platform, which is widely used in academic research to validate control algorithms due to its nonlinear dynamics.

%=============================================================
\section{Progressive Physics-Encoded Neural Models}
%=============================================================

The PE-MHL framework builds a hybrid model incrementally by combining a physics-encoded base block with a sequence of lightweight neural sub-models. Each sub-model is trained to capture the residual error left by its predecessors, so that the ensemble grows in complexity only as needed while preserving what earlier components have already learned.

Starting from a physics-encoded base model $\hat{y}_0$ with parameters $\theta_0$, lightweight neural sub-models $\hat{y}_1, \hat{y}_2, \dots, \hat{y}_L$ are appended sequentially. Note that it is also possible to endow the physics-encoded base model with sub-models, which is useful for capturing multi-physics aspects. 

The ensemble prediction at stage $L$ is the sum of all component outputs:
\begin{equation}
    \hat{y}(x) = \sum_{j=0}^{L} \hat{y}_j(x).
\end{equation}
The resulting architecture is illustrated in Fig.~\ref{fig:PEMHL}.

\input{pictures/PEMHL}

Using \(L\) to denote the number of sub-models, let \(\theta_j\) represent the parameters of sub-model \(j\) and \(\hat{y}_j\) its predicted output. The \emph{PE-MHL} is built as follows:\\

\begingroup
\noindent\textbf{Step 0: Initial base model:}\\
\begin{equation}\label{eq:step0}
\theta_0^* \;=\;
\arg\min_{\theta_0}\;
\frac{1}{N}\sum_{i=1}^N \bigl[y_i - \hat{y}_0\bigr]^2
\;+\;\lambda_{\rm ridge}\,\|\theta_0\|^2
\end{equation}

\noindent\textbf{Step 1: Add first neural component:}\\
\begin{equation}\label{eq:step1}
\{\theta_0,\theta_1\}^*
= \arg\min_{\theta_0,\theta_1}\;
\frac{1}{N}\sum_{i=1}^N \bigl[y_i
  - \hat{y}_0
  - \hat{y}_1\bigr]^2
+ \lambda_{\rm dev}\,\|\theta_0-\theta_0^*\|^2
\end{equation}

\noindent\textbf{Step 2: Add second neural component:}\\
\begin{equation}\label{eq:step2}
\begin{split}
\{\theta_j\}_{j=0}^2{}^*
&= \arg\min_{\{\theta_j\}_{j=0}^2}\;
\frac{1}{N}\sum_{i=1}^N \Bigl[y_i
  - \sum_{j=0}^2 \hat{y}_j\Bigr]^2\\&
+ \lambda_{\rm dev}\sum_{j=0}^1\|\theta_j-\theta_j^*\|^2
\end{split}
\end{equation}

\noindent\textbf{Step $L$: General $L$th component:}\\
\begin{equation}\label{eq:stepk}
\begin{split}
\{\theta_j\}_{j=0}^L{}^*
&= \arg\min_{\{\theta_j\}_{j=0}^L}\;
\frac{1}{N}\sum_{i=1}^N \Bigl[y_i
  - \sum_{j=0}^L \hat{y}_j\Bigr]^2\\
&+ \lambda_{\rm dev}\sum_{j=0}^{L-1}\|\theta_j-\theta_j^*\|^2
\end{split}
\end{equation}
\endgroup

The loss function for training each new component is:
\begin{equation}
  \textit{Loss}
  = \frac{1}{N}\sum_{i=1}^N\Bigl[y_i - \sum_{j=0}^L \hat{y}_j\Bigr]^2
  + \lambda_{\mathrm{dev}}\sum_{j=0}^{L-1}\|\theta_j - \theta_j^*\|_2^2,
\end{equation}
where \(\hat{y}_j\) is the output of the \(j\)-th model with parameters \(\theta_j\), \(L\) is the total number of models at the current step (including the new one), \(\theta_j^*\) are the frozen parameters of model \(j\) from the previous stage, and \(\lambda_{\text{dev}}\) controls the permitted deviation of prior models from their optimal solutions.

The first term minimizes the combined prediction error, while the second term penalizes large changes to previously trained models, ensuring each new component focuses on the residual structure not yet captured by the ensemble.

In the two-component setting (physics base model + one neural network), training with MSE alone causes the two branches to act as inverses of each other without learning individually meaningful structures. This is particularly problematic for physics-guided models, where each component should retain physical interpretability. To enforce this, a penalty term constrains the physics-based component to remain close to its initial Ridge-EVE estimate $\theta_{\mathrm{ridge}}$:
\begin{equation}
    L_{\mathrm{penalty}} = \lambda \| \theta_{\mathrm{poly}} - \theta_{\mathrm{ridge}} \|^2,
\end{equation}
yielding the total loss:
\begin{equation}
    L_{\mathrm{total}} = \mathrm{MSE} + L_{\mathrm{penalty}}.
\end{equation}
This penalty is a special case of the deviation regularization in the general PE-MHL loss, applied at the first stage to anchor the physics component.

%=============================================================
\section{Monotonicity and Convergence of the PE-MHL Residual}
\label{sec:convergence}
%=============================================================

We now show that the progressive training~\eqref{eq:stepk}, when each new
sub-model is initialized through its output layer, yields a training error
that is non-increasing in the number of sub-models and that converges. The
result extends the single-step initialization guarantee
of~\cite{bolderman2022pgnn} from one physics--neural comparison to an
arbitrarily long modular sequence.

For the stage-$L$ ensemble with frozen predecessor optima
$\{\theta_j^*\}_{j=0}^{L-1}$, define the residual and the training error
\begin{equation}
\label{eq:residual}
\begin{split}
  r_i^{(L)} &:= y_i - \sum_{j=0}^{L}\hat{y}_j(x_i;\theta_j^*),\\
  \mathcal{E}_L &:= \frac{1}{N}\sum_{i=1}^N\!\big(r_i^{(L)}\big)^2
  = \frac{1}{N}\big\|r^{(L)}\big\|_2^2,
\end{split}
\end{equation}
where $r^{(L)}=[r_1^{(L)},\dots,r_N^{(L)}]^\top$ and $\mathcal{E}_0$ is the
mean--squared error of the physics-encoded base model~\eqref{eq:step0}.

Following~\cite{bolderman2022pgnn}, write the new sub-model with an affine
output layer
\begin{equation}\label{eq:affine_layer}
\begin{split}
  \hat{y}_L(x;\theta_L)&=\theta_{{\rm OL},L}^\top\,\phi_{{\rm OL},L}(x),\\
  \phi_{{\rm OL},L}(x)&=\big[\,f_{{\rm HL},L}(\theta_{{\rm HL},L},x)^\top,\;1\,\big]^\top,
  \end{split}
\end{equation}
with $\theta_L=\{\theta_{{\rm HL},L},\theta_{{\rm OL},L}\}$, where
$f_{{\rm HL},L}:\mathbb{R}^{n_x}\!\to\mathbb{R}^{n_L}$ is the last hidden layer
($n_L$ neurons) and $\theta_{{\rm OL},L}\in\mathbb{R}^{n_L+1}$ collects the
output weights and bias. Let $\Phi_L\in\mathbb{R}^{N\times(n_L+1)}$ have rows
$\phi_{{\rm OL},L}(x_i)^\top$, let $P_{\Phi_L}$ denote the orthogonal
projector onto $\operatorname{col}(\Phi_L)$, and set
$M_L:=\tfrac{1}{N}\Phi_L^\top\Phi_L$.

\begin{assumption}\label{asn:pe}
For a fixed random draw of $\theta_{{\rm HL},L}$, the data are persistently
exciting for the output layer of sub-model $L$, i.e., $M_L$ is non-singular.
\end{assumption}

Then we can state the following result.

\begin{lemma}\label{lem:decrease}
Let Assumption~\ref{asn:pe} hold. Freeze the predecessors at
$\{\theta_j^*\}_{j=0}^{L-1}$, fix $\theta_{{\rm HL},L}$ at its random
initialization, and initialize the output layer by least squares,
\begin{equation}\label{eq:ls_init}
  \theta_{{\rm OL},L}^{(0)}
  = M_L^{-1}\!\left(\frac{1}{N}\sum_{i=1}^N
      \phi_{{\rm OL},L}(x_i)\,r_i^{(L-1)}\right).
\end{equation}
Then $\hat{y}_L=P_{\Phi_L}r^{(L-1)}$ and
\begin{equation}\label{eq:pythagoras}
  \mathcal{E}_L
  = \mathcal{E}_{L-1}-\frac{1}{N}\big\|P_{\Phi_L}r^{(L-1)}\big\|_2^2
  \;\le\;\mathcal{E}_{L-1},
\end{equation}
with strict inequality if and only if
$\tfrac{1}{N}\sum_{i}\phi_{{\rm OL},L}(x_i)\,r_i^{(L-1)}\neq 0$.
\end{lemma}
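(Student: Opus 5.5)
The argument reduces to the geometry of ordinary least squares. With the predecessors frozen at $\{\theta_j^*\}_{j=0}^{L-1}$ and $\theta_{{\rm HL},L}$ fixed, the stage-$L$ residual is affine in the output weights: $r^{(L)} = r^{(L-1)} - \Phi_L\theta_{{\rm OL},L}$, where we stack $\hat{y}_L(x_i)$ into the vector $\Phi_L\theta_{{\rm OL},L}$. The initialization~\eqref{eq:ls_init} is exactly the normal-equation solution. Since $\tfrac1N\sum_i \phi_{{\rm OL},L}(x_i) r_i^{(L-1)} = \tfrac1N\Phi_L^\top r^{(L-1)}$, we get $\theta_{{\rm OL},L}^{(0)} = (\Phi_L^\top\Phi_L)^{-1}\Phi_L^\top r^{(L-1)}$. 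Assumption~\ref{asn:pe} is used only to guarantee that this inverse exists, i.e.\ that $\Phi_L$ has full column rank. Multiplying by $\Phi_L$ gives the stacked prediction $\Phi_L(\Phi_L^\top\Phi_L)^{-1}\Phi_L^\top r^{(L-1)}$. This matrix is symmetric and idempotent with range $\operatorname{col}(\Phi_L)$, so it is $P_{\Phi_L}$, which proves the first claim $\hat{y}_L = P_{\Phi_L} r^{(L-1)}$.

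For~\eqref{eq:pythagoras}, I write $r^{(L)} = (I-P_{\Phi_L})r^{(L-1)}$. Because $P_{\Phi_L}r^{(L-1)}$ and $(I-P_{\Phi_L})r^{(L-1)}$ are orthogonal, Pythagoras gives
\[
\|r^{(L-1)}\|_2^2 = \|P_{\Phi_L}r^{(L-1)}\|_2^2 + \|r^{(L)}\|_2^2 .
\]
Dividing by $N$ yields the identity, and the inequality follows because the subtracted term is nonnegative. Here $\mathcal{E}_L$ is understood as the error at the least-squares initialization. Subsequent joint training with the deviation penalty starts from this point and, if it is a descent method, cannot increase the penalized loss. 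The penalty is zero at this point because the predecessors sit at $\theta_j^*$. I would remark on this so that monotonicity also transfers to the trained optimum, with the data-fit term bounded by the penalized loss.

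For the strictness characterization, $\|P_{\Phi_L}r^{(L-1)}\|_2 = 0$ holds if and only if $r^{(L-1)} \perp \operatorname{col}(\Phi_L)$, which is equivalent to $\Phi_L^\top r^{(L-1)} = 0$. Conversely, if $\Phi_L^\top r \neq 0$, then $\|P_{\Phi_L}r\|_2^2 = r^\top\Phi_L(\Phi_L^\top\Phi_L)^{-1}\Phi_L^\top r > 0$ by positive definiteness of $M_L$. This gives the stated ``if and only if''.

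The only delicate step is the interpretation of $\mathcal{E}_L$: definition~\eqref{eq:residual} refers to optimized parameters, whereas the lemma concerns the initialization. I would make explicit that the identity holds at $\theta_{{\rm OL},L}^{(0)}$ and that training only improves on it. Everything else is a routine projection computation.
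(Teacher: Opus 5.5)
Your proposal is correct and follows essentially the same route as the paper. Both identify the least-squares initialization as the normal-equation solution, so that $\hat{y}_L = P_{\Phi_L} r^{(L-1)}$ and $r^{(L)} = (I-P_{\Phi_L})r^{(L-1)}$, then apply the Pythagorean identity, with strictness exactly when $\Phi_L^\top r^{(L-1)} \neq 0$; your positive-definiteness check for the strictness equivalence and your note that $\mathcal{E}_L$ is evaluated at the initialization (the transfer to trained iterates being what Remark~\ref{rem:free} handles) are somewhat more explicit than the paper's version.
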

\begin{proof}
The choice $\theta_{{\rm OL},L}=0$ gives $\hat{y}_L\equiv0$; with the
predecessors at $\theta_j^*$ the deviation penalty in~\eqref{eq:stepk}
vanishes and the cost equals $\mathcal{E}_{L-1}$, so this point is feasible.
With all other parameters frozen, the cost~\eqref{eq:stepk} is quadratic in
$\theta_{{\rm OL},L}$ and, by Assumption~\ref{asn:pe}, has the unique global
minimizer~\eqref{eq:ls_init}, whose fitted output is the orthogonal
projection $\hat{y}_L=P_{\Phi_L}r^{(L-1)}$. Hence, the updated residual is the
complementary projection $r^{(L)}=(I-P_{\Phi_L})r^{(L-1)}$. Since $P_{\Phi_L}$
is symmetric and idempotent, these two components are orthogonal, so expanding
$\|r^{(L-1)}\|^2$ gives the Pythagorean identity
$\|r^{(L-1)}\|^2=\|P_{\Phi_L}r^{(L-1)}\|^2+\|(I-P_{\Phi_L})r^{(L-1)}\|^2$,
which yields~\eqref{eq:pythagoras}. The decrease is strict iff
$P_{\Phi_L}r^{(L-1)}\neq0$, i.e., iff the residual is correlated with at least
one new feature.
\end{proof}

\begin{remark}\label{rem:free}
Since the predecessors $\{\theta_j\}_{j=0}^{L-1}$ are not frozen but only
penalized in~\eqref{eq:stepk}, the bound~\eqref{eq:pythagoras} transfers to
the free problem by feasibility: the point obtained by holding the
predecessors at $\{\theta_j^*\}_{j=0}^{L-1}$ and setting
$\theta_{{\rm OL},L}=\theta_{{\rm OL},L}^{(0)}$ attains cost
$\mathcal{E}_{L-1}-\tfrac1N\|P_{\Phi_L}r^{(L-1)}\|_2^2$ with zero deviation
penalty, so any best-retained iterate of~\eqref{eq:stepk} attains a no larger cost. As the penalty is non-negative, the resulting $\mathcal{E}_L$
in~\eqref{eq:residual} -- now evaluated at the \emph{updated} predecessor
parameters -- still satisfies~\eqref{eq:pythagoras} with non-strict
inequality, and with strict inequality whenever the best iterate improves on
this feasible point. 
\end{remark}

Next, we use the above result to analyze the convergence of the PE-MHL modular architecture.

\begin{proposition}\label{prop:converge}
Under Assumption~\ref{asn:pe} at every stage, the training errors
$\{\mathcal{E}_L\}_{L\ge0}$ produced by~\eqref{eq:stepk} with the
initialization~\eqref{eq:ls_init} are non-increasing and bounded below by zero,
so they converge to a positive limit, i.e.,
\begin{equation}\label{eq:limit}
  \mathcal{E}_L\;\searrow\;\mathcal{E}_\infty\ge0 .
\end{equation}
Moreover, the error reduction contributed by each new sub-model,
$\Delta_L:=\tfrac{1}{N}\|P_{\Phi_L}r^{(L-1)}\|_2^2$, must shrink to zero, i.e.,
\begin{equation}\label{eq:summable}
  \sum_{L\ge1}\Delta_L \;=\;\mathcal{E}_0-\mathcal{E}_\infty\;<\;\infty,
  \qquad\text{hence}\qquad \Delta_L\to0 .
\end{equation}
\end{proposition}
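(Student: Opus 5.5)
The plan is to chain Lemma~\ref{lem:decrease} across stages and then apply the monotone convergence theorem for real sequences. Fix any $L\ge1$. Assumption~\ref{asn:pe} holds at stage $L$, and the predecessors are held at $\{\theta_j^*\}_{j=0}^{L-1}$ when the initialization~\eqref{eq:ls_init} is formed. Lemma~\ref{lem:decrease} then gives the exact identity
\begin{equation*}
\mathcal{E}_L=\mathcal{E}_{L-1}-\Delta_L ,\qquad \Delta_L\ge 0 .
\end{equation*}
If the predecessors are instead allowed to move under the deviation penalty, Remark~\ref{rem:free} still gives $\mathcal{E}_L\le\mathcal{E}_{L-1}-\Delta_L$ by feasibility. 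In either case $\{\mathcal{E}_L\}$ is non-increasing. Every $\mathcal{E}_L$ is a mean of squares, so $\mathcal{E}_L\ge0$. A bounded-below monotone real sequence converges, which gives~\eqref{eq:limit} with $\mathcal{E}_\infty\ge0$.

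The statement says the limit is ``positive,'' but the argument only yields a \emph{non-negative} limit. I would read ``positive'' as $\mathcal{E}_\infty\ge0$, which is exactly what~\eqref{eq:limit} displays. The limit may well equal zero, for instance once the residual lies in $\operatorname{col}(\Phi_L)$, after which every later stage has zero residual.

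For~\eqref{eq:summable}, I would telescope the identity from stage $1$ to stage $L$:
\begin{equation*}
\sum_{\ell=1}^{L}\Delta_\ell=\mathcal{E}_0-\mathcal{E}_L .
\end{equation*}
The partial sums are non-decreasing, since $\Delta_\ell\ge0$, and bounded by $\mathcal{E}_0<\infty$. Letting $L\to\infty$ and using~\eqref{eq:limit} gives $\sum_{L\ge1}\Delta_L=\mathcal{E}_0-\mathcal{E}_\infty<\infty$. The terms of a convergent series of non-negative numbers tend to zero, so $\Delta_L\to0$.

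The main obstacle is the free (penalized) case of Remark~\ref{rem:free}. There the equality becomes the inequality $\mathcal{E}_L\le\mathcal{E}_{L-1}-\Delta_L$, so telescoping only gives $\sum_L\Delta_L\le\mathcal{E}_0-\mathcal{E}_\infty$. That still suffices for summability and $\Delta_L\to0$. The exact equality in~\eqref{eq:summable} holds in the frozen-predecessor setting of Lemma~\ref{lem:decrease}, and I would state the proof in that setting and note the inequality version for the free problem. One more point to flag is that the stage-$L$ residual $r^{(L-1)}$ must be the one produced by the previous stage's actual optimum. This is what makes the chain consistent: the output of stage $L-1$ is precisely the frozen point used at stage $L$.
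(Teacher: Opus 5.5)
Your proposal is correct and follows essentially the same route as the paper. Both arguments chain Lemma~\ref{lem:decrease} (with Remark~\ref{rem:free}) to get a non-increasing sequence bounded below by zero, invoke monotone convergence, then telescope the per-stage reductions and use that a convergent series of non-negative terms has vanishing terms. Your two caveats are also well taken and sharper than the paper's write-up: the limit is only guaranteed non-negative, and once the predecessors or the new hidden layer are actually trained, telescoping gives only $\sum_{L\ge1}\Delta_L\le\mathcal{E}_0-\mathcal{E}_\infty$, which still suffices for $\Delta_L\to0$.
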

\begin{proof}
Each stage can only lower the error (Lemma~\ref{lem:decrease} and
Remark~\ref{rem:free}), bounded below by zero; a sequence
that decreases but cannot go below a floor must converge, which
gives~\eqref{eq:limit}. Summing the per-stage reductions~\eqref{eq:pythagoras}
yields the total drop $\mathcal{E}_0-\mathcal{E}_\infty$, which is
finite. Thus, a sum of non-negative terms that is finite forces those terms, $\Delta_L$, to vanish.
\end{proof}

A stronger convergence result can be obtained under a stronger progress condition as follows.
\begin{corollary}
\label{cor:rate}
Suppose each sub-model removes at least a fixed fraction $\gamma\in(0,1]$ of
the error remaining before it was added,
\begin{equation}\label{eq:weakgreedy}
  \Delta_L \;\ge\; \gamma\,\mathcal{E}_{L-1},
  \qquad\forall\,L\ge1 .
\end{equation}
Then $\mathcal{E}_L\le(1-\gamma)^L\,\mathcal{E}_0\to0$, i.e. the error decays
geometrically to zero.
\end{corollary}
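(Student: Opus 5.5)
The plan is a one-step recursion followed by induction on $L$. The key ingredient is the per-stage error relation from Lemma~\ref{lem:decrease}, or its transfer to the free problem in Remark~\ref{rem:free}. Together these give, at every stage $L\ge1$,
\begin{equation*}
  \mathcal{E}_L \;\le\; \mathcal{E}_{L-1}-\Delta_L ,
  \qquad \Delta_L=\tfrac{1}{N}\big\|P_{\Phi_L}r^{(L-1)}\big\|_2^2 .
\end{equation*}
The relation holds with equality when the predecessors are frozen. It holds as an inequality when the predecessors are only penalized, because the best retained iterate of~\eqref{eq:stepk} does no worse than the least-squares feasible point, and the deviation penalty is non-negative. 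Substituting the hypothesis~\eqref{eq:weakgreedy} into this inequality gives the contraction
\begin{equation*}
  \mathcal{E}_L \;\le\; \mathcal{E}_{L-1}-\gamma\,\mathcal{E}_{L-1} \;=\; (1-\gamma)\,\mathcal{E}_{L-1}.
\end{equation*}

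Next I would induct on $L$. The base case $L=0$ is the trivial identity $\mathcal{E}_0\le(1-\gamma)^0\mathcal{E}_0$. For the inductive step, since $1-\gamma\ge0$, multiplying the hypothesis $\mathcal{E}_{L-1}\le(1-\gamma)^{L-1}\mathcal{E}_0$ by $1-\gamma$ preserves the inequality. This yields $\mathcal{E}_L\le(1-\gamma)^L\mathcal{E}_0$.

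For convergence to zero there are two cases. If $\gamma\in(0,1)$, then $(1-\gamma)^L\to0$ geometrically, and since $\mathcal{E}_L\ge0$ we get $\mathcal{E}_L\to0$; this also identifies $\mathcal{E}_\infty=0$ in Proposition~\ref{prop:converge}. If $\gamma=1$, the bound forces $\mathcal{E}_1=0$ and hence $\mathcal{E}_L=0$ for all $L\ge1$, which is consistent with the same formula.

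The only delicate step is the first one: justifying $\mathcal{E}_L\le\mathcal{E}_{L-1}-\Delta_L$ as an inequality rather than an equality in the unfrozen setting. In that setting $\mathcal{E}_L$ is evaluated at updated predecessor parameters, so we must argue that the MSE part of the optimized cost is at most the total cost. That total cost is in turn at most the feasible value $\mathcal{E}_{L-1}-\Delta_L$, which Remark~\ref{rem:free} supplies directly. I would also note that~\eqref{eq:weakgreedy} is stated in terms of $\Delta_L$ computed from the residual $r^{(L-1)}$ at the current stage. The recursion therefore closes without requiring any relation between residuals at different stages beyond this one.
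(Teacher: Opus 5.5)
Your proposal is correct and follows the paper's proof. The paper substitutes \eqref{eq:weakgreedy} into the per-stage decrease \eqref{eq:pythagoras}, using the inequality form from Remark~\ref{rem:free} when the predecessors are only penalized, to obtain $\mathcal{E}_L\le(1-\gamma)\mathcal{E}_{L-1}$, and then iterates from $\mathcal{E}_0$; your explicit induction and separate $\gamma=1$ case just spell out what that one-line argument leaves implicit.
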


\begin{proof}
Condition~\eqref{eq:weakgreedy} in~\eqref{eq:pythagoras} gives
$\mathcal{E}_L\le(1-\gamma)\mathcal{E}_{L-1}$; applying this from
$\mathcal{E}_0$ onward yields the bound.
\end{proof}
\ifl
\begin{remark}\label{rem:interpret}
The two above results provide complementary analyses. Proposition~\ref{prop:converge}
guarantees, with no extra assumptions, that stacking sub-models always helps
and the process settles down: each addition reduces the error and the gains
fade out, which is the theoretical counterpart of the diminishing returns
observed in Fig.~\ref{fig:mae_model} in the Results section, and motivates a finite stopping rule.
What it does \emph{not} promise is that the error reaches zero. For that,
Corollary~\ref{cor:rate} requires every sub-model to keep capturing a
non-vanishing share of the leftover error, which in turn needs the sub-models
to be rich enough to represent the part of $y$ still unexplained. Where the
data hold structure cannot be expressed by no other small sub-model, the fraction
$\gamma$ collapses and the error plateaus above zero. 
\end{remark}
\fi
\begin{remark}
\label{rem:interpret}
Every $\mathcal{E}_L$ above is measured on the training data. Driving
it to zero would mean fitting the training points exactly, including their
measurement noise, which worsens accuracy on
unseen data. The quantity that matters in practice is therefore the test
error, and the deviation penalty $\lambda_{\rm dev}$ (which keeps each module
from fitting noise) together with stopping at a finite number of
sub-models are what keep the model general, rather than minimizing training
error alone.
\end{remark}

%=============================================================
\section{Implementation in Benchmark Problems}
%=============================================================

The PE-MHL framework is evaluated on three benchmarks of increasing complexity. The general architecture and training procedure described in Section~II apply throughout; this section details the specific modeling choices for each case.

\subsection{Static Function}

A nonlinear static function from \cite{pillonetto2022regularization} is used to generate data, defined as:
\begin{equation}
    g(x) = \sin^2(x)(1 - x^2), \quad x \in [0, 1],
\end{equation}
with additive Gaussian noise:
\begin{equation}
    y = g(x) + e,
\end{equation}
where \( e \) is zero-mean noise with variance $0.000468$. 

\subsubsection{Base Model}

A third-degree polynomial is used as the physics-inspired base model, expressed as:
\begin{equation}
    y_i = \theta_1 + \sum_{k=2}^{4} \theta_k x_i^{k-1} = \theta_1 + \theta_2 x_i + \theta_3 x_i^2 + \theta_4 x_i^3, \,\, i = 1, \dots, N,
\end{equation}
where \( x_i \) are input features, \( y_i \) noisy observations, \( \theta=[\theta_1, \theta_2, \theta_3, \theta_4]^T \) model parameters, and \( N \) the number of training points.

In matrix form \( Y = X\theta \), the least squares solution is:
\begin{equation}
\hat{\theta} = \arg\min_{\theta} \| Y - X \theta \|_2^2.
\end{equation}

\subsubsection{Base Model Debiasing}

Ridge regression with $\ell_2$ regularization tuned by the Expected Validation Error (EVE) criterion \cite{pillonetto2022regularization} was selected as the debiasing technique, providing the most reliable results among the methods evaluated.

To improve the robustness of the base polynomial fit, several debiasing techniques were considered. Ridge regression with $\ell_2$ regularization, tuned using the Expected Validation Error (EVE) criterion \cite{pillonetto2022regularization}, provided the most reliable results.

\subsubsection{Hybrid Architecture: Polynomial + Neural Network}
Following debiasing, a two-branch hybrid is constructed:
\begin{itemize}
    \item \textbf{Polynomial Branch:} A 3rd-degree polynomial capturing low-complexity patterns, playing the role of the physics-based model in PGNN frameworks.
    \item \textbf{Neural Network Branch:} A feedforward network with three hidden layers of 50 ReLU neurons each, designed to learn the residual complexity.
\end{itemize}
The combined output is (see Fig.~\ref{fig:architecture}):
\begin{equation}
    \hat{y}(x) = y_{poly}(x) + y_{nn}(x).
\end{equation}

\begin{figure}[htbp]
  \centering
  \resizebox{0.85\columnwidth}{!}{%
    \begin{tikzpicture}[
        >=Latex,
        font=\normalsize,
        node distance=6mm and 12mm
      ]
      \coordinate (input) at (0,0);
      \node (poly) [rectangle, draw, fill=green!20,
                    right=2cm of input, yshift=12mm,
                    minimum width=5.5cm, minimum height=1cm,
                    align=center] {
        \textbf{Polynomial 3\textsuperscript{rd} degree}\\
        $y_{\text{poly}} = \theta_{1} + \theta_{2}x + \theta_{3}x^{2} + \theta_{4}x^{3}$
      };
      \node (nn) [rectangle, draw, fill=blue!10,
                  below=of poly,
                  minimum width=5.5cm, minimum height=1.5cm,
                  align=center] {
        \textbf{Neural Network}\\
        3 layers \texttimes\ 50 ReLU neurons
      };
      \node (sum) [circle, draw, right=12mm of nn.north] at ($(poly.east)!0.5!(nn.east)$) {$+$};
      \draw[->] (sum.east) -- ++(8mm,0) node[right] {$y$};
      \draw[->] (input) -- ++(8mm,0) |- (poly.west);
      \draw[->] (input) -- ++(8mm,0) |- (nn.west);
      \node[anchor=east] at (input) {\(x\)};
      \draw[->] (poly.east) -- (sum.west);
      \draw[->] (nn.east) -- (sum.west);
      \node[draw=black, dashed, fit=(poly)(nn), inner sep=8pt, label={[yshift=0.4cm]above:\textbf{Hybrid Model}}] (hybrid) {};
    \end{tikzpicture}
  }
  \caption{Architecture of the hybrid system with a 3\textsuperscript{rd}-degree polynomial and a neural-network model.}
  \label{fig:architecture}
\end{figure}

\begin{figure}[htbp]
  \centering
  \begin{overpic}[
      width=0.85\columnwidth,
      trim=0 0 0 1.4cm,
      clip,
      percent
    ]{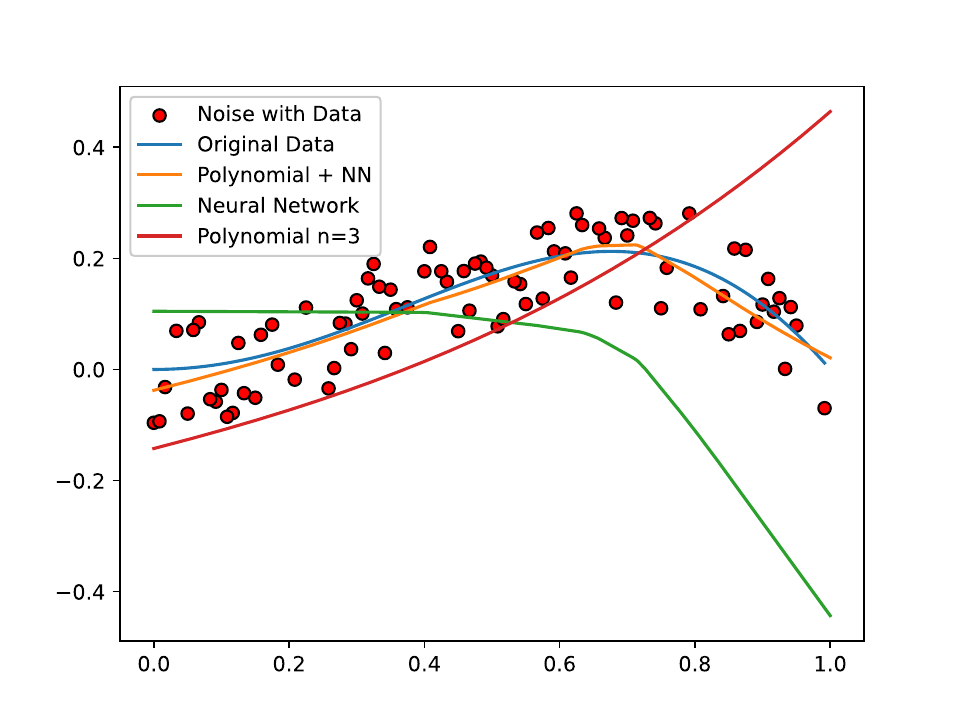}
    \put(44,1){\small $x$ value}
    \put(3,30){\rotatebox{90}{\small y value}}
  \end{overpic}
  \caption{Training without penalty: the total output fits the data, but individual model components behave as inverses and lack meaningful structure.}
  \label{fig:bad_decomposition}
\end{figure}

\subsection{NARX Dynamical System}

The PE-MHL architecture is evaluated on a nonlinear autoregressive model with exogenous input (NARX), originally introduced by Billings \cite{billings2013nonlinear}:
\begin{equation}
\begin{split}
y(k) &= 0.5\,y(k-1) + 0.3\,u(k-1) \\
     &\quad +\,0.3\,u(k-1)\,y(k-1) + 0.5\,u^2(k-1)\,,
\end{split}
\label{eq:NARX}
\end{equation}
where \(k\in\{1,\dots,N\}\) is the discrete time index, \(y(k)\in\mathbb{R}\) the system output, \(u(k)\in\mathbb{R}\) the input, and \(y(k-1)\), \(u(k-1)\) the one-step lagged output and input respectively.

\subsubsection{Monolithic Baseline}

To isolate the effect of progressive stacking, PE-MHL is compared against a single monolithic NARX network with a matched number of trainable parameters. All \(L\) sub-models are collapsed into one feedforward network by setting the hidden-layer width equal to the combined width of all individual sub-models.

The monolithic network  takes as input a bias term, \(y(k-1)\), and \(u(k-1)\), passes through three hidden layers of 300 ReLU neurons each,
\begin{equation}
    \mathrm{ReLU}(z)=\max(0,z),
\end{equation}
and produces \(\hat{y}(k)\) via a single linear output neuron. Training minimizes MSE using the Adam optimizer.
\input{pictures/arch}

\subsubsection{System Excitation Data}
All models are evaluated on two input signals of length \(N=2000\):
\begin{itemize}
  \item \textbf{Gaussian input:} \(u(k)\sim\mathcal{U}[-1,1]\).
   \item \textbf{Multisine input:} \(u(k) = \sum_{i=1}^{4} \sin(2\pi f_i k + \phi_i)\), where \(f_i \in \{0.02,\, 0.05,\, 0.10,\, 0.20\}\) are normalized frequencies and \(\phi_i\) are random phases, scaled to \([-1,1]\).
\end{itemize}
Each dataset yields \(N-1\) usable training samples after prepending one zero-lag for ARX feature construction. A constant step \(u(k)=0.5\) is appended during training to improve input-space coverage; a unit step \(u(k)=1.0\) is used for out-of-sample validation.

\subsection{Experimental Setup: Quanser Aero 2}
%============================================================%

The proposed identification approach is evaluated on the Quanser Aero 2 (Fig.~\ref{fig:quanser}), a lab-scale, two-degree-of-freedom (DOF) helicopter model used in control research.  In this setup, the \emph{yaw} dynamics are isolated by locking the pitch and allowing rotation only about the vertical axis.

\begin{figure}
    \centering
    \includegraphics[width=0.5\linewidth]{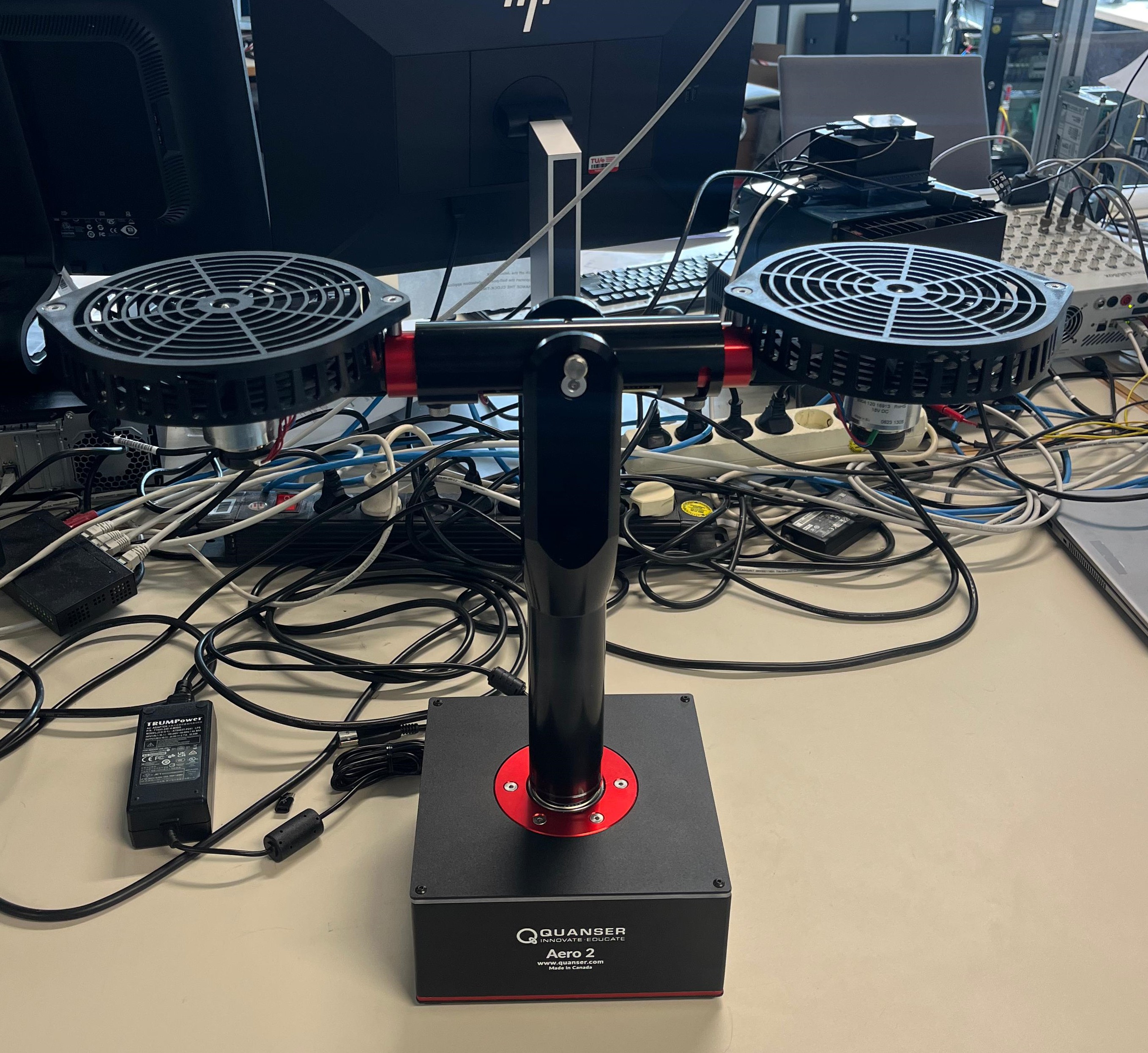}
    \caption{The Quanser Aero 2 system}
    \label{fig:quanser}
\end{figure}

The Aero 2 uses two direct-drive, brushed DC motors (±18 V rating).  A differential drive configuration is implemented for yaw control, in which the input voltage \(u(t)\) is split into two branches:
\begin{enumerate}

    \item  The first branch inverts the voltage (multiplies by -1) and applies it to Motor 1.
    \item  The second branch applies the voltage directly to Motor~2.

\end{enumerate}

\input{pictures/aero}

This inversion causes the two motors to rotate in opposite directions, producing a yaw motion around the vertical axis.

To excite the yaw dynamics across a range of frequencies, a three-tone multisine input signal is applied:

\begin{equation}
\begin{split}
u(t) &= 5\,\sin\bigl(2\pi \times 0.01\,t\bigr)
      + 3\,\sin\Bigl(2\pi \times 0.05\,t + \tfrac{\pi}{3}\Bigr)\\
     &\quad\;+\;10\,\sin\Bigl(2\pi \times 0.1\,t + \tfrac{\pi}{2}\Bigr)\,,
\end{split}
\end{equation}
where \(t\) denotes the time in seconds. The signal is sampled at a fixed sampling time of \(\,T_s=0.002\)s. The resulting yaw angle \(y(t)\) is recorded for system identification.

The resulting data is split into training set and test subset to allow for validation on unseen data. For ARX feature construction, \(n_u=23\) past inputs and \(n_y=15\) past outputs are used, yielding an input vector of
\begin{equation*}
\begin{split}
\mathbf{x}(k) = \bigl[u(k),\dots,u(k-23),\,y(k-1),\dots,y(k-15)\bigr]^T.
    \end{split}
\end{equation*}

As a baseline for comparison, a single monolithic neural network is trained under the same conditions as the PE-MHL model. The total number of epochs is identical, and the hidden layer width of the baseline network matches the total number of hidden neurons across all sub-models in the PE-MHL model.

\section{Results}

\subsection{Static Function}

\subsubsection{Two-Model Hybrid Results}

Figure~\ref{fig:good_decomposition} shows the final output of the two-model hybrid system after training with penalties. The polynomial branch preserves its overall shape and remains close to the original de-biased polynomial fit. The neural network branch primarily captures the residual errors not modeled by the polynomial component.

In contrast, when trained without penalties (Fig.~\ref{fig:bad_decomposition}), the two branches interfere with one another, resulting in less interpretable individual contributions. In the penalized case, both components contribute distinct and meaningful structures to the final prediction.

\begin{figure}[htbp]
  \centering
  \begin{overpic}[
      width=0.82\columnwidth,
      trim=0 0 0 1.4cm,
      clip,
      percent
    ]{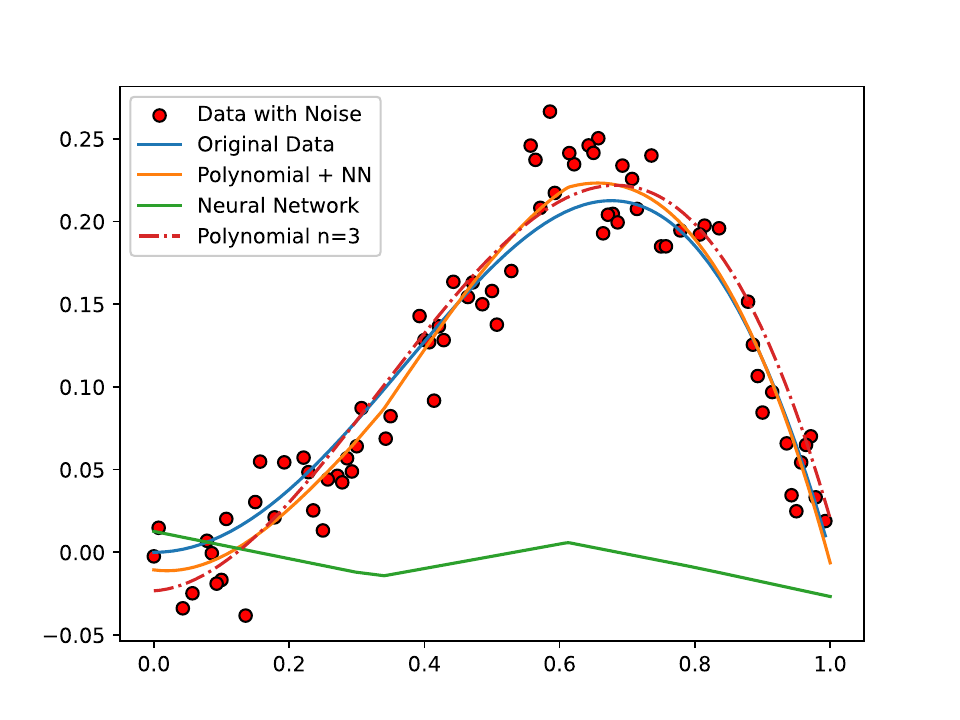}
    % X-axis label at 50% horizontal, 3% vertical
    \put(44,1){\small $x$ value}
    % Y-axis label at 3% across, 50% up, rotated
    \put(2,31){\rotatebox{90}{\small y value}}
  \end{overpic}
  \caption{Training with penalties: both branches learn distinct, meaningful structures.}
  \label{fig:good_decomposition}
\end{figure}

\subsection{PE-MHL Results on the NARX example}
Figure~\ref{fig:step_gaussian} presents the unit‐step response after training on the Gaussian dataset. The orange curve is the true NARX model (Eq.~\ref{eq:NARX}), which settles at 4.0; the green curve is the monolithic neural network baseline, and the purple curve is the PE-MHL ensemble. Although both models exhibit similar rise times, they retain significant steady-state bias: the neural network baseline undershoots to 3.315 ($\approx17 \%$ error), while PE-MHL improves this to 3.45 ($\approx13.8 \%$ error).

\begin{figure}[!ht]
    \centering
    \includegraphics[width=0.9\linewidth,trim=0 0 0 0.3cm,clip]{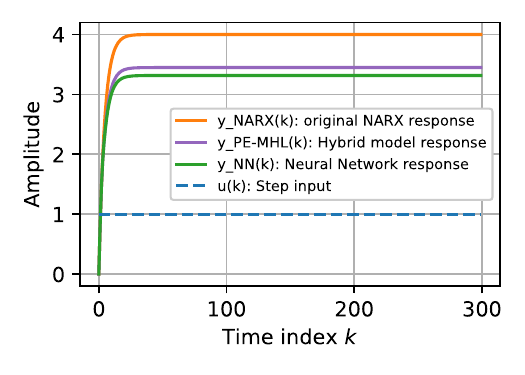}
    \caption{Unit‐step response after training on the Gaussian dataset.}
    \label{fig:step_gaussian}
\end{figure}

Figure~\ref{fig:step_multisine} shows the unit‐step response after training on the multisine dataset. Under this excitation both models exhibit substantially lower bias: the neural network baseline settles at 3.95 ( $\approx1.25 \%$ error) and PE-MHL at 3.97 ($\approx0.75 \%$ error). These results confirm that multisine training improves step‐response accuracy and that the PE-MHL architecture outperforms a single network with equivalent parameter capacity.

\begin{figure}[!ht]
    \centering
    \includegraphics[width=0.9\linewidth,trim=0 0 0 0.3cm,clip]{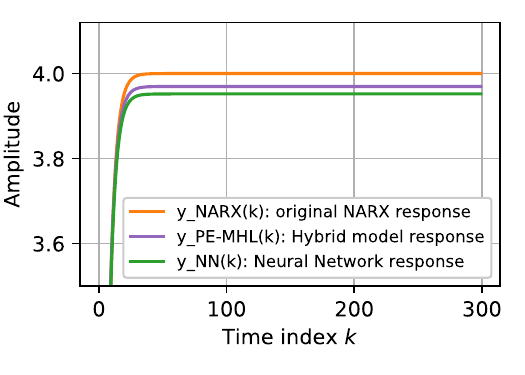}
    \caption{Unit‐step response after training on the multisine dataset.}
    \label{fig:step_multisine}
\end{figure}

\subsection{Experimental validation on the Aero 2 system}

\begin{figure}[!ht]
    \centering
    \includegraphics[width=0.7\linewidth]{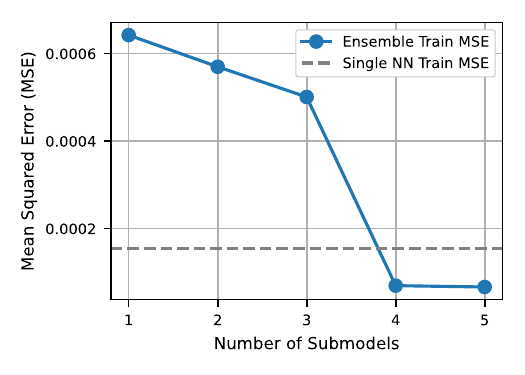}
    \caption{Mean Squared Error after adding each sub-model on the training data}
    \label{fig:mae_model}
\end{figure}

Figure~\ref{fig:mae_model} shows the Mean Squared Error on the \emph{training} data decreasing progressively with the addition of each sub-model:
\begin{itemize}
  \item The largest gains occur in the first three submodels (MSE falls from \(6.64\times10^{-5}\) to \(4.98\times10^{-5}\)),  
  \item Beyond \(L=4\) the curve flattens, indicating diminishing returns; thus, \(L=4\) is a practical choice for the number of sub-models.  
\end{itemize}

This behavior matches the analysis of Section~\ref{sec:convergence}. The monotonic decrease and the necessarily shrinking per-stage gains are guaranteed by Proposition~\ref{prop:converge}, while the flattening beyond \(L=4\) corresponds to the regime where a fresh sub-model can no longer capture a meaningful fraction of the remaining residual -- that is, where the geometric-decrease condition of Corollary~\ref{cor:rate} ceases to hold -- providing a principled justification for stopping at a finite number of sub-models.

\begin{figure}[!ht]
    \centering
    % trim = {left bottom right top}, here we remove 1 cm from the top
    \includegraphics[width=1.0\linewidth,trim=0 0 0 0.15cm,clip]{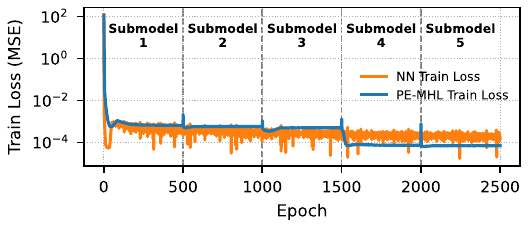}
 
    \caption{Training loss trajectories for the PE‐MHL ensemble and a monolithic NN baseline.  Each vertical dashed line marks the beginning of a new PE‐MHL submodel (every 500 epochs).}
    \label{fig:loss_comparison}
\end{figure}

Figure~\ref{fig:loss_comparison} shows the difference in optimization dynamics between PE-MHL and a single large network.  The PE-MHL’s loss (blue curve) declines in a smooth fashion across each 500-epoch submodel block, whereas the monolithic neural network shows larger oscillations and slower convergence.  This stability, despite both having identical learning rates, indicates better conditioned residual learning in PE-MHL.

\begin{table}[t]
  \centering
  \caption{Final error comparison between the monolithic neural network and the PE-MHL architecture}
  \label{tab:final_errors}
  \begin{tabular}{lcc}
    \hline
    Metric       & Monolithic neural network  & PE-MHL Ensemble \\ \hline
    Train MAE    & 0.011242    & 0.0033745       \\
    Train MSE    & 0.0001546   & 0.0000498       \\
    Test MAE     & 0.0174587   & 0.0031702       \\
    Test MSE     & 0.0003348   & 0.0000154       \\ \hline
  \end{tabular}
\end{table}

Table~\ref{tab:final_errors} summarizes the final performance. Notably, on the unseen test set, the PE-MHL ensemble achieves a test MAE of \(3.17 \times 10^{-3}\) and test MSE of \(1.54 \times 10^{-5}\), substantially lower than the monolithic network’s test MAE of \(1.75 \times 10^{-2}\) and test MSE of \(3.35 \times 10^{-4}\). For PE-MHL the test errors are marginally lower than the training errors; this is consistent with the test signal exciting a less demanding portion of the dynamics than the training signal, together with the deviation penalty $\lambda_{\rm dev}$ acting only on the training objective and thereby slightly raising the reported training fit relative to the unconstrained test evaluation.

\section{Discussion of the results}

\subsection{Static nonlinear function}

The two-model hybrid architecture, combining a debiased polynomial model with a neural network, proved effective in capturing both simple and complex patterns in the data. Crucially, the inclusion of a penalty term in the loss function was essential for preserving the interpretability of the physics inspired polynomial branch. This penalty constrained the polynomial model to remain close to its initial Ridge-EVE estimate, ensuring it retained its physically meaningful structure while allowing the neural network to learn the structures of the data that the polynomial model could not capture. Thus, the penalty based regularization is deemed essential for achieving both accuracy and interpretability in the hybrid system.

\subsection{NARX benchmark and Aero 2 Experimental Setup}

Empirical results from the NARX benchmark and Quanser Aero 2 experiments confirm the effectiveness of the progressive PE-MHL approach. Most performance gains were achieved within the first few sub-models, with diminishing returns beyond that, indicating an effective natural stopping criterion. This empirical pattern is precisely what the convergence analysis of Section~\ref{sec:convergence} predicts: by Proposition~\ref{prop:converge} the training error decreases monotonically and the marginal benefit of each added sub-model necessarily diminishes, so the observed plateau is a principled rather than incidental stopping point. The sustained improvement over the first few sub-models corresponds to the geometric-decay regime of Corollary~\ref{cor:rate}, which holds only while each sub-model still captures a non-negligible fraction of the remaining residual. The better performance of PE-MHL may be attributed to its smoother training dynamics and improved conditioning during optimization, compared to the monolithic neural network baseline as seen in Fig.~\ref{fig:loss_comparison}.

The architecture also introduces a form of self-correction where minor errors introduced by early components can be refined by later ones, enhancing the robustness of the system.

\section{Conclusions}

This paper introduced a progressive hybrid modeling framework that combines physics-encoded models with data-driven learning to improve accuracy and interpretability in system identification, control, and other data-driven modeling tasks. The proposed PE-MHL (Physics-Encoded Modular Hybrid Layers) architecture incrementally adds lightweight sub-models, each trained to model residual errors. This structure adapts model complexity to the task, supports natural stopping criteria, and leads to smoother convergence and improved robustness during training. Beyond these empirical benefits, we established a theoretical guarantee for the framework: with a least-squares initialization of each new sub-model, the training error is monotonically non-increasing in the number of sub-models and provably converges, with geometric decay to zero under a weak-greedy condition. This result generalizes the single-step initialization guarantee of \cite{bolderman2022pgnn} to an arbitrarily long modular sequence and formally explains the diminishing-returns behavior that motivates a finite stopping rule. Compared to a monolithic neural network with an equivalent number of parameters, PE-MHL demonstrated higher accuracy, generalization, and training stability. Its modular nature ensures that each sub-model refines the residual structure without disrupting prior learning, making the method well suited for low-data or noisy environments.

\bibliographystyle{IEEEtran}

\bibliography{references}

\end{document}

%% file: pictures/PEMHL.tex
\begin{figure}[!ht]
\centering
\resizebox{0.45\textwidth}{!}{%
\begin{circuitikz}
\tikzstyle{every node}=[font=\LARGE]
\draw  (1.75,12.75) rectangle (3.75,7);
\draw [ fill={rgb,255:red,206; green,231; blue,255} ] (4.5,12.75) rectangle  node {\LARGE Physics Encoded} (9.5,11.5);
\draw [ fill={rgb,255:red,165; green,255; blue,138} ] (4.5,10.75) rectangle  node {\LARGE $NN_1$} (9.5,9.75);
\draw [ fill={rgb,255:red,165; green,255; blue,138} ] (4.5,8) rectangle  node {\LARGE $NN_L$} (9.5,7);
\draw [short] (4.5,12.25) -- (3.75,12.25);
\draw [short] (4.5,7.5) -- (3.75,7.5);
\draw [short] (4.5,10.25) -- (3.75,10.25);
\draw  (10.25,12.75) rectangle (12.25,7);
\draw [short] (9.5,10.25) -- (10.25,10.25);
\draw [short] (9.5,7.5) -- (10.25,7.5);
\draw [short] (9.5,12.25) -- (10.25,12.25);
\node [font=\Large, rotate around={90:(0,0)}] at (6.75,9) {. . . . . };
\draw [dashed] (4,13) -- (4,6);
\draw [dashed] (4,6) -- (10,6);
\draw [dashed] (10,6) -- (10,13.25);
\draw [dashed] (9.75,13.25) -- (4,13.25);
\node [font=\Large] at (7,6.5) {Hidden Layers};
\node [font=\LARGE] at (11.25,10) {Output
};
\node [font=\LARGE] at (2.75,10) {Input};
\node [font=\LARGE] at (2.75,9.25) {layer};
\node [font=\LARGE] at (11.25,9.25) {layer};
\draw [->, >=Stealth] (0.25,10) -- (1.75,10);
\draw [->, >=Stealth] (12.25,10) -- (13.75,10);
\node [font=\LARGE] at (1,10.25) {\textit{\textbf{X}}};
\node [font=\LARGE] at (13,10.25) {\textit{\textbf{Y}}};
\end{circuitikz}
}%
\caption{PE-MHL architecture: a physics-encoded base model followed by $L$ neural network sub-models.}
\label{fig:PEMHL}
\end{figure}

%% file: pictures/arch.tex
\begin{figure}[!ht]
\centering
\resizebox{0.49\textwidth}{!}{%
\begin{circuitikz}
\tikzstyle{every node}=[font=\Large]

\draw [short] (1.25,9.75) -- (3,9.75);
\draw [short] (1.25,5.75) -- (3,5.75);
\draw [short] (1.25,7.75) -- (3,7.75);

\node [font=\LARGE] at (0,9.75) {$1$};
\node [font=\LARGE] at (0,7.75) {$-y(k-1)$};
\node [font=\LARGE] at (0,5.75) {$u(k-1)$};

\draw  (3.75,9.75) circle (0.75cm) node {\LARGE 
} ;

\draw  (3.75,7.75) circle (0.75cm) node {\LARGE 
} ;
\draw  (3.75,5.75) circle (0.75cm) node {\LARGE 
} ;
\node [font=\LARGE] at (3.5,12.25) {$Input Layer  $};

\draw  (7.5,11.75) circle (0.75cm) node {\LARGE 
} ;
\draw  (7.5,9.75) circle (0.75cm) node {\LARGE 
} ;

\draw  (7.5,6) circle (0.75cm) node {\LARGE 
} ;
\draw  (7.5,4) circle (0.75cm) node {\LARGE 
} ;
\draw [->, >=Stealth] (4.5,9.75) -- (6.75,11.5);
\draw [->, >=Stealth] (4.5,9.75) -- (6.75,9.75);
\draw [->, >=Stealth] (4.5,9.75) -- (6.75,6.25);
\draw [->, >=Stealth] (4.5,9.75) -- (6.75,4.25);
\draw [->, >=Stealth] (4.5,7.75) -- (6.75,11.5);
\draw [->, >=Stealth] (4.5,7.75) -- (6.75,6.25);
\draw [->, >=Stealth] (4.5,7.75) -- (6.75,4.25);
\draw [->, >=Stealth] (4.5,7.75) -- (6.75,9.75);
\draw [->, >=Stealth] (4.5,5.75) -- (6.75,11.5);
\draw [->, >=Stealth] (4.5,5.75) -- (6.75,9.75);
\draw [->, >=Stealth] (4.5,5.75) -- (6.75,6.25);
\draw [->, >=Stealth] (4.5,5.75) -- (6.75,4.25);
\draw [dashed] (7.5,8.75) -- (7.5,7);

\draw  (10.25,11.75) circle (0.75cm) node {\LARGE 
} ;
\draw  (10.25,9.75) circle (0.75cm) node {\LARGE 
} ;

\draw  (10.25,6) circle (0.75cm) node {\LARGE 
} ;
\draw  (10.25,4) circle (0.75cm) node {\LARGE 
} ;
\draw [dashed] (10.25,8.75) -- (10.25,7);

\draw  (13,11.75) circle (0.75cm) node {\LARGE 
} ;
\draw  (13,9.75) circle (0.75cm) node {\LARGE 
} ;

\draw  (13,6) circle (0.75cm) node {\LARGE 
} ;
\draw  (13,4) circle (0.75cm) node {\LARGE 
} ;
\draw [dashed] (13,8.75) -- (13,7);

\draw  (16.75,7.75) circle (0.75cm) node {\LARGE 
} ;
\draw [->, >=Stealth] (8.25,11.75) -- (9.5,11.75);
\draw [->, >=Stealth] (8.25,11.75) -- (9.5,9.75);
\draw [->, >=Stealth] (8.25,11.75) -- (9.5,6.25);
\draw [->, >=Stealth] (8.25,11.75) -- (9.5,4);
\draw [->, >=Stealth] (8.25,9.75) -- (9.5,9.75);
\draw [->, >=Stealth] (8.25,6) -- (9.5,6);
\draw [->, >=Stealth] (8.25,4) -- (9.5,4);
\draw [->, >=Stealth] (8.25,9.75) -- (9.5,11.75);
\draw [->, >=Stealth] (8.25,9.75) -- (9.5,6);
\draw [->, >=Stealth] (8.25,6) -- (9.5,4);
\draw [->, >=Stealth] (8.25,4) -- (9.5,6);
\draw [->, >=Stealth] (8.25,4) -- (9.5,10);
\draw [->, >=Stealth] (8.25,6) -- (9.5,10);
\draw [->, >=Stealth] (8.25,6) -- (9.5,11.75);
\draw [->, >=Stealth] (11,9.75) -- (12.25,9.75);
\draw [->, >=Stealth] (11,6) -- (12.25,6);
\draw [->, >=Stealth] (11,4) -- (12.25,4);
\draw [->, >=Stealth] (11,11.75) -- (12.25,11.75);
\draw [->, >=Stealth] (11,11.75) -- (12.25,9.75);
\draw [->, >=Stealth] (11,11.75) -- (12.25,6);
\draw [->, >=Stealth] (11,11.75) -- (12.25,4);
\draw [->, >=Stealth] (11,9.75) -- (12.25,11.75);
\draw [->, >=Stealth] (11,9.75) -- (12.25,6);
\draw [->, >=Stealth] (11,9.75) -- (12.25,4);
\draw [->, >=Stealth] (11,6) -- (12.25,9.75);
\draw [->, >=Stealth] (11,6) -- (12.25,11.75);
\draw [->, >=Stealth] (11,6) -- (12.25,4);
\draw [->, >=Stealth] (11,4) -- (12.25,9.75);
\draw [->, >=Stealth] (11,4) -- (12.25,6);
\draw [->, >=Stealth] (11,4) -- (12.25,11.75);
\draw [->, >=Stealth] (17.5,7.75) -- (19.25,7.75);
\node [font=\LARGE] at (20,7.75) {y(k)};
\draw [->, >=Stealth] (13.75,9.75) -- (16,7.75);
\draw [->, >=Stealth] (13.75,6) -- (16,7.75);
\draw [->, >=Stealth] (13.75,4) -- (16,7.75);
\draw [->, >=Stealth] (13.75,11.75) -- (16,7.75);
\node [font=\huge] at (10,13.75) {Hidden Layers};
\node [font=\LARGE] at (16.75,12.25) {Output layer};
\node [font=\LARGE] at (3.75,2.25) {$x \in R^3$};
\node [font=\Large] at (7.2,2.25) {$W^{1}\in R^{300\times300}$};
\node [font=\Large] at (10.35,2.25) {$W^{2}\in R^{300\times300}$};

\node [font=\Large] at (16.75,2.25) {$y^{3}\in R^{1\times300}$};
\node [font=\Large] at (13.5,2.25) {$W^{3}\in R^{300\times300}$};
\end{circuitikz}
}%
\caption{Architecture of deep neural network.}
\label{fig:nn_architecture}
\end{figure}

%% file: pictures/aero.tex
\begin{figure}[!ht]
\centering
\resizebox{0.45\textwidth}{!}{%
\begin{circuitikz}
\tikzstyle{every node}=[font=\LARGE]
\node [font=\LARGE] at (5,10.75) {u(t)};
\draw [short] (5.75,10.75) -- (7.5,10.75);
\draw [short] (7.5,11) -- (7.5,11.75);
\draw [->, >=Stealth] (7.5,11.75) -- (9.5,11.75);
\draw  (9.5,12.5) rectangle  node {\LARGE -1} (11.25,11.25);
\draw [->, >=Stealth] (11.25,11.75) -- (12.5,11.75);
\draw  (12.5,12) rectangle (14.5,11.5);
\draw  (13.5,11.75) circle (0.75cm) node {\LARGE M1} ;
\draw  (12.5,10) rectangle (14.5,9.5);
\draw  (13.5,9.75) circle (0.75cm) node {\LARGE M2} ;
\draw [short] (7.5,11) -- (7.5,9.75);
\draw [->, >=Stealth] (7.5,9.75) -- (12.5,9.75);
\draw  (15.25,11.25) rectangle  node {\large position encoder} (18.5,10.5);
\draw [->, >=Stealth] (18.5,10.9) -- (20.5,10.9);
\node [font=\LARGE] at (21.25,11) {y(t)};
\end{circuitikz}
}%
\caption{System model for aero 2 setup.}
\label{fig:aero_system_model}

\end{figure}